\documentclass{article}
\usepackage{aistats2016}
\usepackage{times}
\usepackage{makeidx}  
\usepackage{multirow}      
\usepackage{graphicx}
\usepackage[T1]{fontenc}
\usepackage{bbm}
\usepackage{algorithm, algpseudocode}
\usepackage{amsmath,calc}
\usepackage{booktabs}
\usepackage{natbib}
\newtheorem{thm}{Theorem}

\newtheorem{definition}{Definition}
\newtheorem{proof}{Proof}

\newcommand{\yn}{Y_n}

\newcommand{\iplus}{\mathcal{I}^+}
\newcommand{\iminus}{\mathcal{I}^-}
\newcommand{\PEm}{\mathcal{P}^E_\text{max}}

%
%

\begin{document}

%

%

\twocolumn[

\aistatstitle{Learning Optimized Or's of And's}

\aistatsauthor{ Tong Wang \And Cynthia Rudin }

\aistatsaddress{ CSAIL and Sloan, MIT \\ tongwang@mit.edu\And CSAIL and Sloan, MIT \\ rudin@mit.edu}
 ]

\begin{abstract}
\textit{Or}'s of \textit{And}'s (OA) models are comprised of a small number of disjunctions of conjunctions, also called disjunctive normal form. An example of an OA model is as follows: If ($x_1 = $ `blue' {\em AND} $x_2=$ `middle') {\em OR} ($x_1 = $ `yellow'), then predict $Y=1$, else predict $Y=0$. Or's of And's models have the advantage of being interpretable to human experts, since they are a set of conditions that concisely capture the characteristics of a specific subset of data. 
We present two optimization-based machine learning frameworks for constructing OA models, Optimized OA (OOA) and its faster version, Optimized OA with Approximations (OOAx). We prove theoretical bounds on the properties of patterns in an OA model. We build OA models as a diagnostic screening tool for obstructive sleep apnea, that achieves high accuracy with a substantial gain in interpretability over other methods.  
\end{abstract}

\section{Introduction}
We present mathematical programming formulations for producing \textit{Or}'s of \textit{And}'s, which are sparse disjunctive normal form (DNF) expressions. An OA model might say, for instance, consumers who are female {\em AND} single,  {\em AND} younger than 35 years old, {\em OR} married {\em AND} earn more than \$100K per year, are likely to purchase a product. In creating predictive models for healthcare, marketing, sociology, and in other domains, two aspects have long since been of interest: logical forms, and sparsity \citep[see, e.g.][]{dawes1979robust,johnson2015logic,miller1956magical}. For example, physicians use sparse, easily checkable sets of conditions (symptoms, observations) to classify patients as to whether they have a disease. DNF formulae are particularly useful as \textit{screening} models, where patients who do not meet the Or of And's criteria are not considered for further testing. In marketing, DNF is also called ``disjunctions of conjunctions" or ``non-compensatory decision rules." Marketing researchers strongly hypothesize that consumers use simple rules to screen products, and they would consider purchasing only the products in this \textit{consideration set} to reduce the cognitive load of considering all products. The consideration set may be precisely an Or's of And's classifier \citep{hauser2010disjunctions, gilbride2004choice}. 

Despite the efforts that theoretical communities have placed on learning DNF \citep[e.g.,][]{klivans01, littlestone1988learning, ehrenfeucht1989general}, the algorithms designed for inductive logic programming that essentially produce DNF \citep{muggleton1994inductive,lavrac1994inductive}, the associative classification algorithms \citep{ma1998integrating,han2000mining,li2001cmar,yin2003cpar, chen2006new,cheng2007discriminative}, and rule induction methods \citep[e.g.,][]{cohen1995fast},
there has been little in the way of algorithms designed for applications where cognitive simplicity is first and foremost \citep[with some exceptions, like][which we discuss later]{hauser2010disjunctions}. 
For instance, \cite{yin2003cpar} reported that the average number of rules used by CMAR was 305 and CPAR had 244 rules on average, on 26 datasets from UCI ML Repository \citep{Lichman:2013}; these are not cognitively simple models. Besides, all of the algorithms discussed above use greedy approximations, which hurts accuracy and sparsity. For instance, RIPPER employs local greedy splitting, meaning that a mistake at the beginning is difficult to undo. Inductive logic programming starts with a collection of rules and locally combines them. The associative classification methods also follow separate-and-conquer or covering strategies. Unlike these methods, our methods aim to produce cognitive simple models, and do not use greedy approximations or similar heuristics. 

The closest work to ours are that of \citet{hauser2010disjunctions} in the marketing literature, and \citet{wang2015or} on Bayesian modeling of DNF formulae. \citet{hauser2010disjunctions} pre-mines the rules and then uses an integer program equivalent to a set-covering problem. They try to minimize the error of coverage while favoring patterns that fit the largest subset of data. 
The work of \citet{wang2015or} has the advantage of a Bayesian interpretation of the prior parameters, but a disadvantage in that the analytical bounds on the maximum a posteriori solution of the Bayesian method are weaker than those we present in this paper for the minimizer of the optimization problem. 

Our goal is cognitive simplicity, as well as predictive accuracy. We choose mathematical programming (mixed-integer and integer linear programming -- MIP and ILP) and rule mining to form our models.  Using these tools have several benefits, namely flexibility on the user's side on the objective and constraints, fast solvers that have been improving exponentially over recent years, and a guarantee on the optimality of the solution. We improve computation also using statistical approximations. In one of our algorithms, OOAx, we first mine rules and design an ILP to choose the subset of rules to form the OA model. This is a statistical assumption that dramatically speeds up computation, but we can show (in Theorem \ref{thm:pos_supp}) that as long as we mine all rules with sufficiently high support, the optimal solution will be attained anyway. We also present various bounding conditions on the optimal solution.

\section{Optimized Or's of And's}\label{sec:model1}
Let us discuss the first framework for learning Or's of And's, called Optimized Or's of And's (OOA).
We work with a data set $S = \{(X_n,Y_n)\}_n^N$, consisting of $N$ examples with $J$ attributes of mixed type. $\yn\in\{1,-1\}$ represents the labels. 
Numerical attributes are indexed by index set $\mathcal{J}_\text{n}$ and categorical attributes are indexed by $\mathcal{J}_\text{c}$. The $j$-th attribute of the $n$-th example is denoted as $X_{nj}$.

An OA classifier consists of a set of patterns that characterize a single class, here, the positive class. Each pattern is a conjunction of conditions (literals), and the number of conditions is called the \textit{length} of a pattern. For example. the length of pattern ``age $ \geq 30$ {\em AND} has hypertension {\em AND} is female'' is 3. Let $z$ denote a pattern, and  $\mathbbm{1}_z(X)$ indicate if $X$ satisfies pattern $z$. $A$ represents a set of patterns. An OA classifier built on $A$ is denoted as $f_A$:
\begin{equation}
f_A(X)=  \begin{cases} 1 & \exists z \in A, \text{s.t.}\mathbbm{1}_z(X)=1 \\  0 & \text{otherwise.} \end{cases}
\end{equation}

\subsection{MIP Formulation}
We formulate a mixed integer program to generate a pattern set containing numerical and categorical attributes. The MIP uses the following objective $L(A)$ to minimize the training error while maintaining sparseness of the model.
\begin{eqnarray}
\small
\resizebox{.9\hsize}{!}{$L(A)=\frac{\#\text{error}(A)}{N} + C_1\#\text{literals}(A)  + C_2 \#\text{patterns}(A)$}.\label{eqn:oa_obj}
\end{eqnarray}
The first term in the objective is the loss function, which counts the number of classification errors. The regularization terms include 1) the total number of literals in $A$, denoted as $\#\text{literals}(A)$, which is the sum of the length of each pattern in $A$, and 2) the total number of patterns in $A$, denoted as $\#\text{patterns}(A)$. The two terms are scaled by parameters $C_1$ and $C_2$ to penalize  the complexity of the model. $C_1$ represents the percentage of training errors the user is willing to trade in order to reduce a pattern by one literal. Similarly, $C_2$ represents the percentage of training errors a user needs to trade to reduce one pattern. 
 A user can tune $C_1$ and $C_2$ to influence the shape of the output.

Now we explain how the constraints work. A challenging part is to deal with both numerical and categorical attributes in the MIP. For numerical attributes, the MIP needs to select the upper and lower boundary to form a range; for categorical attributes, it needs to select a category for a literal. Simultaneously, the MIP needs to decide for each example if it satisfies the literals and patterns. All of the constraints are linear in the decision variable, to ensure a duality gap or proof of optimality on the solution we obtain. 
\subsubsection{Literals for Numerical Attributes}
For a numerical attribute, a literal $j$ (for simplicity, when we refer to literal $j$, we mean a literal containing attribute $j$) has the form ``$l_{kj} \leq X_{\cdot j} \leq u_{kj}$'', where $u_{kj}$ and $l_{kj}$ represent the upper and lower boundary of the range in literal $j$. $k$ is a pattern index. For each example $X_n$, let $\hat{u}_{nkj}\in\{0,1\}$ indicate if $X_{nj}$ satisfies the upper bound $u_{kj}$ and $\hat{l}_{nkj}\in \{0,1\}$ indicate if $X_{nj}$ satisfies the lower bound $l_{kj}$. That is,
$\hat{u}_{nkj} = 1$ if $X_{nj}\leq u_{kj}$, and $\hat{l}_{nkj} = 1$ if $X_{nj}\geq l_{kj}$. Using a big $M$ formulation, we obtain the following constraints, that for $\forall n,k, \forall j \in \mathcal{J}_\text{n}$,
\begin{align}
& u_{kj} - X_{nj} \leq M \hat{u}_{nkj}, \label{eqn:uhat1}\\
& u_{kj} - X_{nj} \geq M (\hat{u}_{nkj} -1) + \epsilon,\label{eqn:uhat2}\\
& X_{nj} - l_{kj} \leq M \hat{l}_{nkj}, \label{eqn:lhat1}\\
& X_{nj} - l_{kj} \geq  M(\hat{l}_{nkj}-1) + \epsilon, \label{eqn:lhat2} \\
& u_{kj} \leq U_j \label{eqn:u_ub}\\
& l_{kj} \geq L_j \label{eqn:l_lb}
\end{align}
A small number $\epsilon$ is used to force $\hat{u}_{nkj}=1$ when $X_{nj} = u_{kj}$, and force $\hat{l}_{nkj}=1$ when $X_{nj} = l_{kj}$. $U_j$ and $L_j$ denote the maximum and minimum value of attribute $j$. Constraints (\ref{eqn:u_ub}) and (\ref{eqn:l_lb}) bound on $u_{kj}$ and $l_{kj}$ to ensure a bounded $M$ for computation efficiency. 

It is possible that the upper and lower bounds apply to all examples, when both constraints (\ref{eqn:u_ub}) and (\ref{eqn:l_lb}) are binding. In that case, the literal does not have any classification power. We need numerical literals that are meaningful, or what we call \textit{substantive}, that their ranges  only apply to a subset of training examples. This means at most one of constraints (\ref{eqn:u_ub}) and (\ref{eqn:l_lb}) can be binding.
Let a binary variable $\delta_{kj}$ indicates if literal $j$ in pattern $k$ is substantive. We use a big $M$ formulation to construct the constraints. 
Therefore $\forall k, \forall j \in \mathcal{J}_\text{n}$,
\begin{align}
& M \delta_{kj}  \geq  U_j - u_{kj},  \label{eqn:u_dtkj}\\
& M\delta_{kj} \geq l_{kj}-L_j  , \label{eqn:l_dtkj} \\
& M(1 - \delta_{kj}) \geq (u_{kj} - l_{kj}) - (U_j - L_j) + \epsilon, \label{eqn:ul_dtkj}
\end{align}
Constraint (\ref{eqn:u_dtkj}) means $\delta_{kj} = 1$ if $u_{kj}<U_j$. Constraint (\ref{eqn:l_dtkj}) means $\delta_{kj} = 1$ if $l_{kj}>L_j$.  
(\ref{eqn:ul_dtkj}) forces $\delta_{kj} = 0$ when $u_{kj} = U_j$ and $l_{kj} = L_j$, i.e., literal $j$ is non-substantive.

\subsubsection{Literals for Categorical Attributes}
For a categorical attribute, a literal $j$ has the form ``$X_{\cdot j} = \text{the $v$-th category}$'', where $v \in \{1,...V_j\}$ is an index for categories of attribute $j$ and $V_j$ is the total number of categories. We use $o_{kjv} \in \{0,1\}$ to indicate whether the $v$-th category of attribute $j$ is present in literal $j$ of pattern $k$. 
To determine if $X_n$ satisfies the condition in literal $j$, let $\hat{o}_{nkj} \in \{0,1\}$ indicate if $X_{nj}$ equals the category contained in this literal. We binary code $X_{nj}$ into $X_{njv}$ such that $X_{njv} = 1$ if $X_{nj}$ takes the $v$-th category of attribute $j$.  Therefore $
\hat{o}_{nkj} = 1$ if and only if there exists $v \in \{1,...V_j\}$ such that $o_{kjv} = 1$ and $X_{njv}=1$.
We formulate it as the following. For $\forall n,k, \forall j \in \mathcal{J}_\text{c}$,
\begin{align}
&\hat{o}_{nkj}  \leq \sum_{v}X_{njv}o_{kjv}, \label{eqn:cate_active1}\\
&V_j\hat{o}_{nkj}  \geq \sum_{v}X_{njv}o_{kjv}, \label{eqn:cate_active2}\\
& \sum_{v \in \{1,...,V_j\}}o_{kjv} \leq 1.\label{eqn:oneattribute}
\end{align}
(\ref {eqn:oneattribute}) ensures each categorical literal contains at most one value. This constraint forces a pattern to have a fixed form. If we remove the constraint and allow a literal to take multiple values, a pattern could have the following form: ``$5 \leq X_1 \leq 20$ AND $X_2 = $ red or blue.'' It depends on the application and users' preference as to whether leave this constraint in the MIP. The model will work the same without changing the rest of the formulation.  

We define that a categorical literal $j$ is \textit{substantive}, if there exists some $v\in\{1,...,V_j\}$ such that $o_{kjv} = 1$, indicated by $\delta_{kj}\in\{0,1\}$. For $\forall k, \forall j \in \mathcal{J}_\text{c}$,
\begin{align}
& \delta_{kj} \leq \sum_{v \in \mathcal{V}_j}o_{kjv},  \label{eqn:cate_attribute1}\\
& V_j\delta_{kj} \geq \sum_{v \in \mathcal{V}_j}o_{kjv}.   \label{eqn:cate_attribute2} 
\end{align}
\subsubsection{Counting Classification Errors}
Given the literals, $X_n$ satisfies pattern $k$ if and only if it satisfies every literal in the pattern.
For categorical attributes, we consider both cases where a literal is substantive, i.e., $\delta_{kj} = 1$, and we need $\hat{o}_{nkj} = 1$; or non-substantive, i.e. $\delta_{kj} = 0$, and $\hat{o}_{nkj}$ is always 0 for all $v$. For numerical attributes, when the literal is substantive, the MIP needs to check if a data point satisfies both upper and lower bounds of the range, indicated by $\hat{u}_{nkj}$ and $\hat{l}_{nkj}$; when the literal is non-substantive, i.e., $u_{kj} = U_j$ and $l_{kj} = L_j$, then $\hat{u}_{nkj} = 1$ and $\hat{l}_{nkj} = 1$ for all $X_n$. Using $\omega_{nk}\in\{0,1\}$ to indicate if $X_n$ satisfies pattern $k$, the above conditions can be formulated below. $\forall n,k$, 
\begin{align}
&\zeta_k + \sum_{j\in \mathcal{J}_\text{n}} \left( \hat{u}_{nkj} + \hat{l}_{nkj}\right) + \sum_{j\in \mathcal{J}_\text{c}}\left(\hat{o}_{nkj} + 1 - \delta_{kj}\right) - \notag \\
& \quad \quad \quad \quad \quad \quad \quad \quad\quad \quad  \left(2|\mathcal{J}_\text{n}| + |\mathcal{J}_\text{c}|\right)  \leq \omega_{nk}, \label{eqn:pattern1}\\
& (2|\mathcal{J}_\text{n}|+|\mathcal{J}_\text{c}| + 1)\omega_{nk} \leq  \zeta_k + \sum_{j\in \mathcal{J}_\text{n}} \left( \hat{u}_{nkj} + \hat{l}_{nkj}\right) + \notag \\
& \quad \quad \quad \quad \quad \quad\quad \quad \quad \quad \quad\sum_{j\in \mathcal{J}_\text{c}}\left(\hat{o}_{nkj} + 1 - \delta_{kj} \right).\label{eqn:pattern2} 
\end{align}
Let $\xi_n\in\{0,1\}$ indicate if a classification error is made, which means either a positive data point does not satisfy any pattern, or a negative data point satisfies at least one pattern. In both cases $\xi_n = 1$. These two situations are captured by constraints (\ref{eqn:pos_error1}) and (\ref{eqn:neg_error1}).
\begin{align}
& \xi_n + \sum_k^K \omega_{nk} \geq 1, \forall n \in \iplus ,\label{eqn:pos_error1}\\
& K\xi_n \geq \sum_k^K \omega_{nk}, \forall n \in \iminus ,\label{eqn:neg_error1}
\end{align}
where $\iplus$ denotes the set of indices for positive examples and $\iminus$ denotes the set of indices for negative examples. $K$ is the upper bound on the number of patterns that we allow the solution to have. MIP creates this $K$ ``boxes'' that will be filled up as it searches in the solution space.

When the MIP is formulated, we do not know how many out of the $K$ ``boxes'' the MIP will use. Therefore we introduce binary variables $\zeta_k \in \{0,1\}$ to indicate if pattern $k$ is non-empty in the final pattern set, which means it contains at least one substantive literal. For $\forall k$,
\begin{align}
J\zeta_k \geq \sum_{j} \delta_{kj}. \label{eqn:zeta}
\end{align}
Since we are minimizing the total number of patterns in the objective, the constraint will always be binding.

\subsubsection{The Objective}
Now we reprsent the objective using decision variables introduced before. 
The MIP minimizes
\begin{equation*}
\frac{1}{N}\sum_{n=1}^N\xi_n + C_1\sum_{k}^{K}\sum_j^J\delta_{kj} + C_2 \sum_{k=1}^{K}\zeta_k
\end{equation*}
over variables $\omega_{nk}$,$u_{kj}$, $l_{kj}$, $\hat{u}_{nkj}$, $\hat{l}_{nkj}$, $o_{kjv}$, $\hat{o}_{nkj}$, $\delta_{kj}$, $\delta_j$, $\xi_n$, and $\zeta_k$,
such that they satisfy constraints (\ref{eqn:uhat1}) to (\ref{eqn:zeta}).  
The complexity of the MIP comes from three aspects, 1) choosing the upper and lower boundaries for ranges in numerical literals, and picking categories for categorical literals, 2) deciding for each example if it satisfies every literal and every pattern, and 3) deciding how many patterns are constructed from the $K$ ``boxes.''  There are in total $\mathcal{O}(NKJ)$ constraints and $\mathcal{O}(NKJ)$ decision variables for this MIP, though the full matrix of variables corresponding to the mixed integer programming formulation is sparse since most literals operate only on a small subset of the data. This formulation can be solved efficiently for small to medium sized datasets.  As the size of the dataset grows,  the computation gets  complicated. We might need a faster method that operates in an approximate way on a much larger scale, presented below. 

\section{Optimized Or's of And's with Approximations}
To speed up the learning process, we propose {\bf Optimized Or's of And's with Approximations} (OOAx), that separates from the optimization process, the first two previously mentioned aspects of complexity. OOAx uses a {\em pre-mining then selecting} approach. It takes advantage of mature pattern mining techniques to generate a set of patterns. Then a secondary criteria is applied to further screen the rules to form a candidate pattern set. 
Finally, an integer linear program (ILP) searches within these patterns set for an optimal set. This method consists of following three steps, {\em pattern mining}, {\em pattern screening} and {\em pattern selecting}.

\subsection{Pattern Mining}
There are many frequently used pattern mining methods such as FP-growth \citep{han2000mining}, Apriori \citep{agrawal1994fast}, Eclat \citep{zaki1997new}, etc. In our implementation, we use FP-growth in python \citep{borgelt2005implementation} that takes the binary coded data, and user specified minimum support and maximum length, to generate patterns that satisfy the two requirements. The algorithm runs sufficiently fast (usually less than a second for thousands of observations). 
%
Since the FP-growth algorithm handles binarized data, we discretize the numerical attributes by manually selecting thresholds for bins. For instance, $X = 3.5$ can be transformed into $ 2\leq X\leq 4$, etc. Note that there are other pattern mining techniques that handle real-valued variables. 
\subsection{Pattern Screening}
In the pattern mining step, the number of generated patterns is usually overwhelming for even a medium size data set. For instance, for the sleep apnea data set (which we will discuss in detail in the experiment sections) of size 1192 patients and 112 binary coded attributes, if the maximum length is 3 and the minimum support is 5\%, millions of patterns are generated. Ideally, we would like the candidate pattern set to contain thousands of patterns for computational convenience. Therefore, we use a secondary criteria to further screen the patterns.
\begin{equation}
\text{Score}(z) = \text{InfoGain}(S|z) - \gamma l_z.\label{eqn:criteria}
\end{equation}
This criteria considers the classification power of a pattern, measured by information gain $\text{InfoGain}(S|z)$, and the sparsity, measured by the length of the pattern  $l_z$.
Information gain of pattern $z$ on data $S$ is $\text{InfoGain}(S|z) = H(S) - H(S|z )$, where $H(S)$ is the entropy of $S$, written as $H(S) = -\sum_{i}P_i\log P_i$. $H(S|z)$ is the conditional entropy of $S$. Using this criteria, we select a set of candidate patterns $\mathcal{P}$ of size $K_{\mathcal{P}}$. 

To represent the sparseness of each pattern, we create a binary matrix $\bf{P}$ of size $K_{\mathcal{P}} \times J$, where each row represents which attribute is present in a pattern. For instance, $P_{kj}=1$  indicates that literal $j$ is substantive in pattern $k$, and $P_{kj}=0$ otherwise. We also need to determine for each example, which of the $K_{\mathcal{P}}$ patterns it satisfies. For a data set with $N$ examples, we create a matrix $\mathbf{W}$ of size $N \times K_{\mathcal{P}}$, where  the $k$-th element in the $n$-th row, $\omega_{nk}$, indicates if the $n$-th observation satisfies pattern $k$. Both matrices are pre-computed before the final step.

\subsection{Pattern Selecting}
The previous two steps greatly reduce the computational load by feeding the final step with a set of high quality candidate patterns. Now our goal is only to select an optimal set $A$ from the candidate set $\mathcal{P}$. We formulate an ILP using the same objective (\ref{eqn:oa_obj}), and present it below.
\begin{equation*}
\min_{\xi_n,\zeta_k} \frac{1}{N}\sum_{n=1}^N\xi_n + C_1\sum_{k}^{K_{\mathcal{P}}} \zeta_k l_k  + C_2 \sum_k^{K_{\mathcal{P}}} \zeta_k
\end{equation*}
such that
\begin{align}
& \xi_n + \sum_{k=1}^{K_{\mathcal{P}}}\omega_{nk}\zeta_k \geq 1, \forall n \in \iplus \label{eqn:pos_error2}\\
& K\xi_n \geq \sum_{k=1}^{K_{\mathcal{P}}}\omega_{nk}\zeta_k, \forall n \in \iminus \label{eqn:neg_error2}\\
&\xi_n,\zeta_k \in \{0,1\}.
\end{align}
The length of pattern $k$, $l_k$, can be pre-computed by $l_k = \sum_{j=1}^{J}P_{kj}$.  
Constraint (\ref{eqn:pos_error2}) means that an error occurs for a positive example if it does not satisfy any patterns that are selected. Constraint (\ref{eqn:neg_error2}) means that an error occurs for a negative example if it satisfies at least one pattern that is selected. This ILP only involves $\mathcal{O}(N)$ constraints and $\mathcal{O}(N)+\mathcal{O}(K_{\mathcal{P}})$ variables, which is much simpler than the MIP in an OOA framework.

The difference between OOA and OOAx method is that the latter avoids forming patterns in the optimization process, by handling it to other efficient off-the-shelf algorithms. Separating the mining step from the optimization problem renders more control to users over the quality and size of desired patterns. Users can manually modify the pre-mining and screening process by applying domain-specific minimum support, maximum length and secondary selection criteria. 
\vspace{-2mm}
\section{Analysis on Patterns and OA Models}\label{sec:analysis}
\vspace{-2mm}
In this section, we discuss the quality of patterns in an OA classifier. Certain properties of the patterns improve computation complexity. We also show the VC dimension of OA models and compare OA classifiers with other discrete classifiers (decision trees and random forests). Due to the page limit, some proofs are provided in the supplementary material. 
\subsection{Bounds on Patterns} 
\vspace{-2mm}
Define the {\em support set} of pattern $z$ over data set $S$ as
\begin{equation}
\mathcal{I}^S(z) = \{X|\mathbbm{1}_z(X) = 1,X \in S\},
\end{equation}
and the {\em support} of pattern $z$ over $S$ as
\begin{equation}
\text{supp}^S(z) = |\mathcal{I}^S(z)|.
\end{equation}
$\text{supp}^{S^+}(z)$ is called the {\em positive support} of $z$, which is the number of positive examples in $\mathcal{I}^S(z)$, and $\text{supp}^{S^-}(z)$ is called the {\em negative support} of $z$, which is the number of negative examples in $\mathcal{I}^S(z)$. 

An OA classifier is essentially an ensemble of weaker classifiers, patterns.
Including patterns with a low quality is expensive, and as we will prove, unnecessary. First we show in Theorem \ref{thm:neg_supp} that the optimal solution never includes a pattern with a high negative support. 
\begin{thm}\label{thm:neg_supp}
Take an OA model with regularization parameters $C_1$ and $C_2$. The OA model is trained on a data set $S$, consisting of $N$ examples, $N^+$ of which are positive examples. If $A^* \in \arg\min_{A} L(A)$, then for any $z \in A^*$, $\text{supp}^{S^-}(z) \leq  N^+ - N\left(C_1+C_2 \right)$.
\end{thm}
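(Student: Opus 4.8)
The plan is to sandwich the optimal objective value $L(A^*)$ between two bounds and then rearrange. The crucial design choice is that the effective comparison is not a \emph{local} one (deleting $z$ from $A^*$ and re-optimizing) but a \emph{global} comparison against the trivial ``all-negative'' classifier $f_\emptyset$ induced by the empty pattern set $A = \emptyset$.

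First I would get the upper bound. Since $A^*$ minimizes $L$ and $\emptyset$ is a feasible pattern set, $L(A^*) \le L(\emptyset)$. The empty model predicts $0$ everywhere, so it misclassifies exactly the $N^+$ positive examples and contains no literals and no patterns; hence $L(\emptyset) = N^+/N$, giving $L(A^*) \le N^+/N$.

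Next I would produce a lower bound on $L(A^*)$ that exposes $\text{supp}^{S^-}(z)$. The key point is that every negative example satisfying $z$ is necessarily an error under $f_{A^*}$: because $z \in A^*$, such a point triggers a prediction of $1$ even though its label is $-1$. Counting only these points already gives $\#\text{error}(A^*) \ge \text{supp}^{S^-}(z)$. Moreover, since $A^*$ contains at least the pattern $z$ (whose length is at least $1$), we have $\#\text{literals}(A^*) \ge 1$ and $\#\text{patterns}(A^*) \ge 1$, so the regularization contributes at least $C_1 + C_2$. Combining, $L(A^*) \ge \text{supp}^{S^-}(z)/N + C_1 + C_2$.

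Chaining the two bounds yields $\text{supp}^{S^-}(z)/N + C_1 + C_2 \le N^+/N$, and multiplying through by $N$ gives the claimed inequality $\text{supp}^{S^-}(z) \le N^+ - N(C_1 + C_2)$. The main obstacle here is conceptual rather than computational: the natural first instinct is to argue by removing $z$ from $A^*$, but a local deletion only controls the negatives covered \emph{exclusively} by $z$ and breaks down when a negative example is covered by several patterns at once. Recognizing that the comparison against $\emptyset$, combined with the observation that all of $z$'s negatives are errors regardless of overlaps, cleanly circumvents this difficulty is the heart of the argument.
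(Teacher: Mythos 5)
Your proof is correct and follows essentially the same route as the paper's: compare $L(A^*)$ against $L(\emptyset) = N^+/N$, lower-bound the regularization by $C_1 + C_2$, and lower-bound the error count by the negative examples covered by $z$ (the paper passes through $\text{supp}^{S^-}(A^*) \geq \text{supp}^{S^-}(z)$ as an intermediate step, which is equivalent to your direct count). No gaps.
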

This means after pattern mining, we can safely reduce the pattern space by disregarding patterns with a negative support above $N^+ - N\left(C_1+C_2 \right)$. Similarly, we can also prove that if a pattern has a low positive support, removing it always achieves a better objective. Let $A_{\backslash z}$ denote the pattern set with pattern $z$ removed from $A$. 
\begin{thm}\label{thm:pos_supp}
Take an OA model with regularization parameters $C_1$ and $C_2$. The OA model is trained on a data set $S$, consisting $N$ examples. If $\text{supp}^{S^+}(z) \leq (C_1+C_2)N$, then $L(A_{\backslash z}) \leq L(A)$.
\end{thm}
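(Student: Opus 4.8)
The plan is to compare $L(A)$ and $L(A_{\backslash z})$ term by term, tracking how each of the three components of the objective in (\ref{eqn:oa_obj}) changes when the single pattern $z$ is deleted from $A$. The two regularization terms are immediate: deleting $z$ lowers $\#\text{patterns}$ by exactly one, contributing a gain of $C_2$, and lowers $\#\text{literals}$ by $l_z$, the length of $z$, contributing a gain of $C_1 l_z$. Since any substantive pattern has at least one literal, $l_z \geq 1$, so the combined regularization gain is at least $C_1 + C_2$. The only term that can move in the unfavorable direction is the error count, so the whole argument reduces to bounding how much the training error can increase.

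Next I would analyze the error term. Because $f_A$ predicts the positive class whenever \emph{some} pattern in $A$ fires, removing $z$ can only switch a prediction from positive to negative; it can never switch a negative prediction to positive. Consequently, deletion can only (i) newly misclassify positive examples that relied on $z$, and (ii) newly correctly classify negative examples that were false positives solely because of $z$. Effect (ii) can only decrease the error, so it is safe to discard when upper-bounding. For effect (i), a positive example becomes an error only if it satisfied $z$ and satisfied no other pattern in $A_{\backslash z}$; the set of such examples is contained in the set of positive examples in $\mathcal{I}^S(z)$, whose cardinality is $\text{supp}^{S^+}(z)$. Hence the newly created errors number at most $\text{supp}^{S^+}(z)$, giving $\#\text{error}(A_{\backslash z}) - \#\text{error}(A) \leq \text{supp}^{S^+}(z)$.

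Combining the two steps yields
\begin{equation*}
L(A_{\backslash z}) - L(A) \leq \frac{\text{supp}^{S^+}(z)}{N} - C_1 l_z - C_2 \leq \frac{\text{supp}^{S^+}(z)}{N} - (C_1 + C_2),
\end{equation*}
and the hypothesis $\text{supp}^{S^+}(z) \leq (C_1 + C_2)N$ makes the right-hand side nonpositive, so $L(A_{\backslash z}) \leq L(A)$.

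The calculation is short once set up correctly, so the only real obstacle is the careful bookkeeping in the error step: justifying the asymmetry that deletion can only help on negatives while it can only hurt on positives, and verifying that every newly misclassified positive is drawn from $\mathcal{I}^S(z)$ rather than from some larger set. I would make this precise by appealing to the disjunctive (Or) structure of $f_A$ explicitly, which avoids overcounting and also covers the degenerate case $A = \{z\}$. Note finally that the statement holds for an arbitrary $A$ containing $z$, not merely a minimizer, which is exactly what makes it usable as an a~priori pre-screening rule during pattern mining.
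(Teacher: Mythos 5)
Your proposal is correct and follows essentially the same route as the paper's proof: bound the error increase from deleting $z$ by $\text{supp}^{S^+}(z)$ (the paper phrases this as the ``worst case'' where $z$ covers only positives and none of them are covered elsewhere), credit the regularization terms with at least $C_1 + C_2$, and combine. Your write-up is somewhat more careful about why the error can only increase on positives and only decrease on negatives, but the decomposition and the final inequality are identical.
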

It means we need not bother mining rules of low positive support.
Theorem \ref{thm:pos_supp} is a stronger statement than Theorem \ref{thm:neg_supp} since it provides a lower bound on positive support for patterns in all pattern sets and saying that removing a low supported pattern always improves the performance; while Theorem 2 only applies to optimal solutions.

With the above theoretical guarantees, we know it is safe to reduce the pattern space, by setting the minimum positive support to be $\left(C_1+C_2\right)N$ when we pre-mine the patterns and throwing away patterns with negative support higher than $ N^+ - N\left(C_1+C_2 \right)$ in the screening stage. This does not benefit an OOA framework as it directly forms patterns. But it provides strong computational motivation for pre-mining patterns in an OOAx framework.

The sparseness of a model is also associated with the number of patterns in an OA model. We prove in Theorem~\ref{thm:size} that the number of patterns in an optimal pattern set is upper bounded. 
\begin{thm}\label{thm:size}
Take an OA model with regularization parameters $C_1$ and $C_2$. The OA model is trained on a set of $N$ examples, $N^+$ of which are positive examples. If $A^* \in \arg\min_{A} L(A)$, then $|A^*| \leq \frac{N^+/N}{C_1+C_2}.$
\end{thm}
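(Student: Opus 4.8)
The plan is to bound $|A^*|$ by comparing the optimal objective value against that of the trivial empty model, and then extracting $|A^*|$ from the regularization terms alone. First I would evaluate $L(\emptyset)$, the objective of the classifier built on the empty pattern set. Since $f_\emptyset(X) = 0$ for every $X$, this classifier labels every example negative; it therefore misclassifies exactly the $N^+$ positive examples while correctly classifying all negatives, so $\#\text{error}(\emptyset) = N^+$. With no patterns and no literals, the two regularization terms vanish, giving $L(\emptyset) = N^+/N$.

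Next I would invoke optimality: because $A^* \in \arg\min_A L(A)$, we have $L(A^*) \le L(\emptyset) = N^+/N$. It then remains to show that the left-hand side is at least $(C_1+C_2)|A^*|$. Here I would use two observations. The error term $\#\text{error}(A^*)/N$ is nonnegative, so it only helps the inequality and can be dropped from the lower bound. And every pattern in $A^*$ is a nonempty conjunction containing at least one substantive literal, so $\#\text{literals}(A^*) \ge \#\text{patterns}(A^*) = |A^*|$. Combining these, $L(A^*) \ge C_1 \#\text{literals}(A^*) + C_2 |A^*| \ge (C_1+C_2)|A^*|$. Chaining the two bounds gives $(C_1+C_2)|A^*| \le N^+/N$, and dividing by the positive quantity $C_1+C_2$ yields the claimed inequality $|A^*| \le \frac{N^+/N}{C_1+C_2}$.

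The argument is essentially immediate once the empty model is chosen as the comparison baseline, so I do not anticipate a serious obstacle. The only point requiring care is the literal-count inequality $\#\text{literals}(A^*) \ge |A^*|$: it relies on the modeling convention, enforced by the substantiveness variables and by constraint (\ref{eqn:zeta}), that a counted pattern is never empty and hence contributes at least one literal. Without exploiting this, the same comparison would only recover the weaker bound $|A^*| \le \frac{N^+/N}{C_2}$, so the sharper statement genuinely uses both regularization terms.
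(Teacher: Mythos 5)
Your proposal is correct and follows essentially the same argument as the paper: compare $L(A^*)$ against $L(\emptyset)=N^+/N$, drop the nonnegative error term, and use the fact that each pattern contributes at least one literal so that $C_1\#\text{literals}(A^*)+C_2\#\text{patterns}(A^*)\geq (C_1+C_2)|A^*|$. No gaps to report.
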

This theorem is meaningful not only for showing the simplicity of the output, but also gives us a suggestion for $K$ when we use the MIP in an OOA framework. Knowing that the optimal set can never be larger than $\frac{N^+/N}{C_1+C_2}$, we can safely set $K$ to be $\frac{N^+/N}{C_1+C_2}$. The smaller $K$ can be set, the better it is computationally for the MIP.
\subsection{VC Dimension of an OA classifier}
\vspace{-2mm}
Let us consider the VC dimension of hypothesis classes representing pattern sets selected from a pre-mined set $\mathcal{P}$. There are some results for k-DNF \citep{ehrenfeucht1989general} and monotone functions (that is, Boolean
functions that can be represented without negated literals) \citep{procaccia2006exact}. \citet{littlestone1988learning} has shown that the class of $k$-term monotone $l$-DNF formulas (i.e., with monomials containing at most $l$ variables) has VC dimension at least $lk\lfloor\log(\frac{n}{m})\rfloor$, where $l\leq m \leq n$, and $k \leq {m \choose l}$. However, his theorem does not have the constraint that the patterns come from a fixed pattern set. 

Let $\mathcal{S} = \mathbbm{R}^J$ represent the complete set of all possible data that could be constructed from $J$ attributes. To compute the VC dimension, we introduce the following definition. 
\begin{definition}\label{def:es}
An \textbf{Efficient Set} of $\mathcal{P}$ is a set of patterns where the support set of each pattern is not a subset of the rest of the efficient set, i.e.,
$$\mathcal{P}^E = \{z | z\in \mathcal{P}, \mathcal{I}^{\mathcal{S}}(z) \not\subset \mathcal{I}^{\mathcal{S}}({\mathcal{P}^E_{\backslash z})\}}.$$
\end{definition}
This means for any pattern $z$ in $\mathcal{P}^E$, there exists data points that satisfy only $z$ and none of the rest of the patterns in $\mathcal{P}^E$.
We call the efficient set with the maximum number of patterns the \textbf{Maximum Efficient Set} of $\mathcal{P}$, denoted as $\mathcal{P}^E_\text{max}$.
We claim that the VC dimension of OOAx learned from $\mathcal{P}$ depends on the size of $\mathcal{P}^E_\text{max}$, stated as the following. 
\begin{thm}\label{thm:vc}
The VC dimension of an OA classifier $f$ built from $\mathcal{P}$ equals the size of the maximum efficient set of $\mathcal{P}$: $$\text{VCdim}(f):= |\mathcal{P}^E_\text{max}|.$$
\end{thm}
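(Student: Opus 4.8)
The plan is to prove the equality by establishing the two inequalities $\text{VCdim}(f) \geq |\PEm|$ and $\text{VCdim}(f) \leq |\PEm|$ separately. Both directions exploit the monotone, union-based structure of an OA classifier: enlarging the selected set $A \subseteq \mathcal{P}$ can only change a point's label from $0$ to $1$, never the reverse, and a point is labeled $1$ precisely when it lies in the support set of some pattern in $A$.

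For the lower bound, I would fix a maximum efficient set $\PEm = \{z_1,\dots,z_m\}$ with $m = |\PEm|$. By Definition~\ref{def:es}, for each $i$ the support set $\mathcal{I}^{\mathcal{S}}(z_i)$ is not contained in the union of the support sets of the remaining patterns, so I can select a witness point $x_i \in \mathcal{S}$ that satisfies $z_i$ but none of the other $z_j$ with $j \neq i$. These witnesses are automatically distinct: if $x_i = x_j$ then this point would satisfy both $z_i$ and $z_j$, contradicting that it satisfies only one of the $z$'s. I then claim $\{x_1,\dots,x_m\}$ is shattered. Given any target subset $T \subseteq \{1,\dots,m\}$, take $A = \{z_i : i \in T\} \subseteq \mathcal{P}$; then $x_i$ is covered by some pattern in $A$ exactly when $z_i \in A$, i.e.\ when $i \in T$, which realizes the desired labeling. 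Hence $\text{VCdim}(f) \geq m$.

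For the upper bound, I would suppose that some set $\{x_1,\dots,x_q\}$ with $q > m$ is shattered and derive a contradiction. Shattering realizes, in particular, each \emph{singleton-positive} labeling that assigns $1$ to $x_i$ and $0$ to every other point; let $A_i \subseteq \mathcal{P}$ be a set of patterns achieving it. Since $f_{A_i}(x_i) = 1$, some pattern $z_i \in A_i$ covers $x_i$, and since $f_{A_i}(x_j) = 0$ for all $j \neq i$, no pattern of $A_i$ covers any $x_j$; in particular $z_i$ covers $x_i$ and no other $x_j$. The patterns $z_1,\dots,z_q$ are therefore distinct, and for each $i$ the point $x_i$ belongs to $\mathcal{I}^{\mathcal{S}}(z_i)$ but to none of the $\mathcal{I}^{\mathcal{S}}(z_j)$ with $j \neq i$. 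This means $\{z_1,\dots,z_q\}$ satisfies Definition~\ref{def:es} verbatim and is thus an efficient set of size $q > m$, contradicting the maximality of $\PEm$. Hence no set larger than $m$ can be shattered, giving $\text{VCdim}(f) \leq m$ and, combined with the lower bound, the claimed equality.

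The step requiring the most care is the upper bound argument. The key insight is that only the $q$ singleton-positive labelings, rather than all $2^q$ labelings, are needed to extract the patterns $z_i$, and one must verify carefully that this extracted collection meets the efficient-set condition exactly as stated: each support set must escape the union of the others, witnessed by its corresponding $x_i$. I would also be explicit that the domain over which witness points are chosen is the full space $\mathcal{S}$, so that the $x_i$ constructed in the lower bound are legitimately available, and that distinctness of the witnesses (required since a shattered set consists of distinct points) follows from the efficiency property rather than being assumed.
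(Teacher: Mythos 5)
Your proposal is correct and follows essentially the same route as the paper's proof: the lower bound via witness points extracted from the maximum efficient set and pattern subsets realizing arbitrary labelings, and the upper bound by contradiction via the singleton-positive labelings yielding an efficient set larger than $\mathcal{P}^E_\text{max}$. Your added care about the distinctness of the witnesses and of the extracted patterns tightens details the paper leaves implicit, but the argument is the same.
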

\begin{proof}\normalfont
First we prove that $\text{VCdim}(f) \geq |\mathcal{P}^E_\text{max}|$, which means there exists a set of $|\mathcal{P}^E_\text{max}|$ examples $X_1,...X_{|\PEm|}$ that any labels $Y_1,...,Y_{|\PEm|}$ can be realized by a classifier $f$ built from $\mathcal{P}$. To construct this example set, we use the maximum efficient set $\PEm$. For any pattern $z_i$ in $\PEm$, since $\mathcal{I}^{\mathcal{S}}(z_i) \not\subset \mathcal{I}^{\mathcal{S}}(\PEm\backslash z_i)$, there always exists a data point $X_i \in \mathcal{S}$ that satisfies only $z_i$, i.e.,
$$X_i \in \mathcal{I}^{\mathcal{S}}(\PEm \backslash z) -  \mathcal{I}^{\mathcal{S}}(z),$$
for $i \in \{1,...,|\PEm|\}$.
Each $X_i$ is covered by exactly one pattern in $\PEm$. These points can always be shattered since for any labels, we can from a pattern set $A=\{z_i|z_i\in \PEm, \text{s.t. }\mathbbm{1}_{z_i}(X_i) = 1, Y_i = 1\}$. Therefore, all possible labels of $Y_1,...,Y_N$ can be realized, which means that $\text{VCdim}(f) \geq |\mathcal{P}^E_\text{max}|$.

Then we show $\text{VCdim}(f)\leq|\PEm|$. We prove this by contradiction. Let $\PEm$ be the maximum efficient set of $\mathcal{P}$.  Assume there exists a set of $h$ examples $X_1,...X_h$ where $h>|\PEm|$, and their labels $Y_1,...Y_h$ can always be realized. Let $\mathbf{0}_{\backslash i}$ denote an all-zero vector of size $h$ except a one at the $i$-th position. For $\mathbf{0}_{\backslash i}$ to be a realizable set of labels, there must exist a pattern $z_i$ that satisfies $\mathbbm{1}_{z_i}(X_i) = 1$ and  $\mathbbm{1}_{z_i}(X_j) = 0$ for $j \neq i$. This should be true for all $i \in \{1,...,h\}$. Therefore, there must exist $h$ such patterns that each of them covers a data point that only satisfies this pattern. According to definition \ref{def:es}, this is equivalent to declaring that these $h$ patterns is an efficient set, and the size of the set is $h$, which is greater than $|\PEm|$. This contradicts the assumption that $\PEm$ is the maximum efficient set and should contain the largest number of patterns. Therefore, $\text{VCdim}(f)\leq|\PEm|$.

Thus, we conclude that the VC-dimension of a classifier $f$ built from $\mathcal{P}$ is $|\PEm|$. (Learning an efficient set will be another topic that we do not discuss in this paper.)
\end{proof}
\subsection{Comparing with Other Discrete Classifiers}
Like OA classifiers, decision trees and random forests also discretize the input space and assign each subspace with a label. We prove that for these models, there always exist equivalent OA classifiers. These theorems are simple, but may not be obvious to those who have not thought about it.
We present the definition of two classifiers being \textit{equivalent} below.
\begin{definition}
Two classifiers $f_1, f_2$ are equivalent if for any input $X$, $f_1(X) = f_2(X)$.
\end{definition}
In a decision tree, the leaves divide up the input space into areas with different labels, which will be the predicted outcome for any data that ends up in that area. A path from the root to a leaf is a conjunction of literals, i.e., a pattern. See Figure \ref{fig:tree} as an example. 
\begin{figure}[h]
\centering
\includegraphics[width=0.45\textwidth]{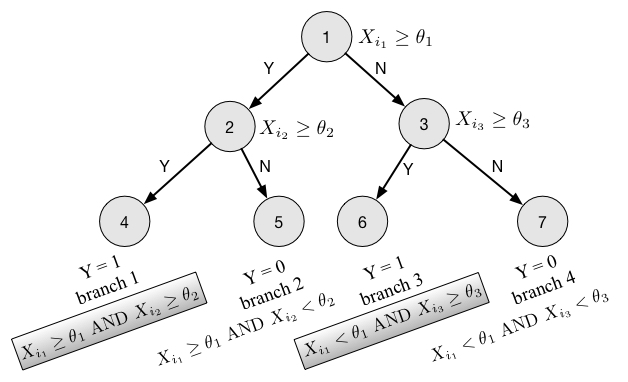}
\caption{A decision tree and the corresponding patterns.}
\label{fig:tree}
\end{figure}
The decision tree ends up with 4 leaves, and therefore 4 patterns. 
To convert the tree into an equivalent OA classifier, we simply collect the patterns that are associated with positive leaves, in this case, leaf 4 and 6, shown in grey boxes.

\begin{thm}\label{thm:equivalent_list}
For any decision tree $f_1$, there exists an equivalent OA classifier $f_2$, where the number of patterns in $f_2$ equals to the number of positive $Y$ labels in $f_1$.
\end{thm}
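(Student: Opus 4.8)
The plan is to prove the statement constructively, exactly as the figure suggests: given the decision tree $f_1$, I would build $f_2$ by collecting, for every leaf that $f_1$ labels positive, the conjunction of the split conditions along the unique root-to-leaf path, and then taking the disjunction (the ``Or'') of these conjunctions. Each such path is a conjunction of literals, hence a legitimate pattern $z$, and the number of patterns placed in the set $A$ is by construction exactly the number of positively labeled leaves, which is the ``number of positive $Y$ labels in $f_1$.'' Thus the cardinality claim is immediate once equivalence is established, and the real content of the theorem is the equivalence $f_1(X)=f_2(X)$.

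The key structural fact I would establish first is that the leaves of $f_1$ partition the input space: every $X$ reaches exactly one leaf. This follows by an easy induction on the depth of the tree, using that at each internal node the branching conditions are mutually exclusive and jointly exhaustive (e.g.\ a numerical split $X_{\cdot j}\leq t$ versus $X_{\cdot j} > t$, or the categorical analogue), so that $X$ descends along exactly one child at every node. Translating this to patterns, $X$ satisfies the pattern of a given leaf if and only if $X$ is routed to that leaf, and therefore $X$ satisfies the pattern of precisely one leaf.

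With the partition in hand, equivalence is a two-case check. If $X$ lands in a positive leaf, then $f_1(X)=1$; the corresponding pattern is one of those we placed in $A$ and $X$ satisfies it, so $f_2(X)=1$. If instead $X$ lands in a negative leaf, then $f_1(X)=0$; by the ``exactly one leaf'' property $X$ satisfies no other leaf-pattern, in particular none of the positive-leaf patterns collected in $A$, so $\mathbbm{1}_z(X)=0$ for every $z\in A$ and hence $f_2(X)=0$. In both cases $f_1(X)=f_2(X)$, and since $X$ was arbitrary the two classifiers are equivalent.

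The main obstacle is the second case: because an OA classifier fires as soon as $X$ satisfies \emph{any one} of its patterns, I must rule out the possibility that a point sitting in a negative leaf nonetheless happens to satisfy some positive-leaf pattern. This is exactly where the mutual exclusivity of the sibling split conditions is essential, since it is what guarantees that distinct leaf-patterns have disjoint support sets and that no ``spurious'' positive pattern is triggered. Once this disjointness is pinned down via the induction above, the remainder of the argument is routine bookkeeping.
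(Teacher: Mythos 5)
Your construction is exactly the one the paper uses (the proof is only sketched in the main text around Figure~1: collect the root-to-leaf conjunctions of the positively labeled leaves), and your argument is correct; you in fact supply the one detail the paper leaves implicit, namely that mutual exclusivity of sibling splits makes the leaf patterns' support sets disjoint, so a point in a negative leaf cannot trigger a positive-leaf pattern.
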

Similar inductions hold for random forests. Random forest is an ensemble method based on decision trees. If an input data point falls into a positive leaf in at least half of all the trees, then it is labeled as positive. Therefore, the equivalent OA classifier consists of patterns that are conjunctions of positive patterns from at least half of all the trees. We summarize the above statements into Theorem \ref{thm:equivalent}.
\begin{thm}\label{thm:equivalent}
For any random forest $f_1$, there exists an equivalent OA classifier $f_2$. If $f_1$ consists of $K_\text{rf}$ decision trees, and the $k$-th tree has $n_k$ positive leaves for $k \in \{1,...,K_\text{rf}\}$, then the size of the pattern set in $f_2$ is upper bounded by $\sum_{\pi \in \Pi}\prod_{k \in \pi} n_k$, where $\Pi$ is a collection of all possible combinations of $\lfloor \frac{K_\text{rf}}{2} \rfloor +1$ elements selected from $\{1,...,K\}$.
\end{thm}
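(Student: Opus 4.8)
The plan is to reduce the random forest to an \textit{Or}'s of \textit{And}'s expression by composing three conversions: a single-tree conversion, a majority-vote expansion, and a distribution of AND over OR. First I would apply Theorem \ref{thm:equivalent_list} tree by tree. For each tree $k$, its positive-prediction region is exactly the disjunction of the $n_k$ root-to-leaf patterns ending in a positive leaf; writing these as $z_{k,1},\dots,z_{k,n_k}$, the event ``tree $k$ votes positive'' is $b_k(X) = \bigvee_{i=1}^{n_k}\mathbbm{1}_{z_{k,i}}(X)$. Since each $z_{k,i}$ is a conjunction of the literals along a path, it is a legitimate pattern in our sense.

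Next I would rewrite the forest's majority rule as a disjunction of conjunctions of tree votes. The forest predicts positive exactly when at least $m := \lfloor K_\text{rf}/2\rfloor + 1$ of the $b_k$ equal $1$, and this threshold condition is logically equivalent to
\begin{equation*}
\bigvee_{\pi \in \Pi}\ \bigwedge_{k \in \pi} b_k(X),
\end{equation*}
where $\Pi$ is the collection of all $m$-element subsets of $\{1,\dots,K_\text{rf}\}$: if at least $m$ trees vote positive then some $m$-subset has all its votes equal to $1$, and conversely any such subset witnesses a positive forest vote.

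The final step is to turn each conjunction of tree votes into a set of patterns by distributing AND over OR. Substituting $b_k = \bigvee_{i}\mathbbm{1}_{z_{k,i}}$ and expanding,
\begin{equation*}
\bigwedge_{k\in\pi}\bigvee_{i=1}^{n_k}\mathbbm{1}_{z_{k,i}}(X) \;=\; \bigvee_{(i_k)_{k\in\pi}}\ \bigwedge_{k\in\pi}\mathbbm{1}_{z_{k,i_k}}(X),
\end{equation*}
where the outer disjunction ranges over all assignments that pick one index $i_k\in\{1,\dots,n_k\}$ for each $k\in\pi$. Each term $\bigwedge_{k\in\pi} z_{k,i_k}$ is a conjunction of literals, hence a single pattern, and there are $\prod_{k\in\pi} n_k$ of them. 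Collecting all patterns over all $\pi\in\Pi$ produces an OA classifier $f_2$ equivalent to $f_1$, whose pattern set has size at most $\sum_{\pi\in\Pi}\prod_{k\in\pi} n_k$.

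I expect the only real subtlety --- and the reason the count is an upper bound rather than an equality --- to be the redundancy among these patterns: the same conjunction can be generated by overlapping subsets $\pi$, and some conjunctions mix contradictory literals (e.g. $X_1 \geq 5$ from one tree with $X_1 \leq 3$ from another), yielding empty, infeasible, or mutually subsumed patterns that can be discarded without changing $f_2$. Verifying the logical equivalence of the threshold rule with the subset disjunction, and checking that a conjunction of path-patterns is still a well-formed pattern, are the two places where I would be most careful, but both become routine once the three-stage decomposition is in place.
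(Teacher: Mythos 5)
Your proof is correct and follows essentially the same route the paper takes: convert each tree to its positive root-to-leaf patterns, express the majority vote as a disjunction over all $\lfloor K_\text{rf}/2\rfloor+1$-subsets of trees, and distribute AND over OR to obtain one pattern per choice of positive leaf from each tree in the subset, with redundancy explaining why the count is only an upper bound. The paper in fact gives only a brief sketch of this argument in the surrounding text, so your three-stage decomposition is a faithful and somewhat more explicit version of the intended proof.
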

The size is upper bounded by instead of exactly equal to  $\sum_{\pi \in \Pi}\prod_{k \in \pi} n_k$  because some patterns could be equivalent, or contained in others. Note that we only need conjunction of  $\lfloor \frac{K_\text{rf}}{2} \rfloor +1$ patterns because conjunctions of more than that are contained in conjunctions of exactly $\lfloor \frac{K_\text{rf}}{2} \rfloor +1$ positive patterns. 

The above theorems provide theoretical guarantees that OA classifiers can be as good as decision trees and random forests, in terms of predictive performance, although it might not be desired to create complex OA models, since the whole purpose of designing an OA model is to favor its interpretability over other models. 
\vspace{-2mm}
\section{Experiments} \label{sec:experiment}
Our experiments include applying OA models to diagnose obstructive sleep apnea (OSA) and experimenting on 9 public datasets from UCI Machine Learning repository \citep{Lichman:2013}.

To construct simple OA models for interpretability purposes, we set the maximum number of patterns to be 5 in all experiments. (In an OOA framework, we set $K=5$; in a OOAx framework, we add a constraint that the sum of $\zeta_k$'s is less than or equal to 5.) Since we placed strong restrictions on the characteristics of OA models, we expect to lose predictive accuracy over unrestricted baseline methods. In many of the experiments we did, we found that OA models do not lose in performance, and most of the time are the top performing models, while achieving a substantial gain in interpretability.

\vspace{-4mm}
\subsection{Diagnosing Obstructive Sleep Apnea}
The main experimental result is an application of OA models to build a  diagnostic screening tool based on routinely available medical information. We analyzed polysomnography and self-reported clinical information from 1922 patients tested in the clinical sleep laboratory of Massachusetts General Hospital, first analyzed in 2015 \citep{UstunRu15,UstunEtAl15}. The goal is to classify which patients who enter into the MGH Sleep Lab have OSA, based on a survey filled out upon admission.  We produce predictive models for OSA screening using attributes from self-reported symptoms and self-reported medical information. The attributes include detailed information such as age, sex, BMI, sleepiness, if the patient snores, if the patient wakes up during sleep, if the patient falls back to sleep easily, the level of tiredness, etc. The data set was binary coded into 112 attributes.

Due to the size of this dataset, we chose OOAx for faster computation. We mined patterns with minimum support of 5\% and maximum length of 3. We tuned parameters $C_1$ and $C_2$ using nested cross-validation to obtain the best performance, under the constraint that the pattern size cannot exceed 5. We measured out-of-sample performance using accuracy from 5-fold cross validation for OA models and 5 other methods that adhere to a certain level of interpretability, BOA \citep{wang2015or}, Lasso, C4.5, CART and RIPPER. For all baseline methods, we tuned the hyperparameters  with grid search in nested cross validation. The results are displayed in Table~\ref{tab:sleep}.

\begin{table}[h]
\setlength\tabcolsep{4pt}
\centering
\caption{Accuracy comparison for OA models and baselines on obstructive sleep apnea dataset.}
\label{tab:sleep}
\begin{tabular}{l|c|l}
\toprule 
\multicolumn{1}{c|}{} & \multicolumn{1}{c|}{Accuracy}          & \multicolumn{1}{c}{ Complexity }                                                                                                                                     \\ \hline
OOAx                    & .80(.01)                      & \begin{tabular}[c]{@{}l@{}}total number of patterns = 2.8\\ average length of patterns = 1.67 \\ total number of literals = \textbf{4.7}\end{tabular}                      \\ \hline
BOA           & .80(.01)               & \begin{tabular}[c]{@{}l@{}}total number of patterns = 4\\ average length of patterns = 1.75 \\ total number of literals =\textbf{7}\end{tabular}                        \\ \hline
RIPPER                & \multicolumn{1}{c|}{.79(.01)} & \multicolumn{1}{l}{\begin{tabular}[l]{@{}l@{}}total number of rules = 4.6\\ average length of rules = 4 \\ total number of literals = \textbf{18.4}\end{tabular}} \\ \hline
C4.5                  & .79(.01)                      & \begin{tabular}[c]{@{}l@{}}depth = 5\\ total number of nodes = \textbf{14.4}\end{tabular}                                                            \\ \hline
CART                  & .79(.01)                      & \begin{tabular}[c]{@{}l@{}}depth = 5\\ total number of nodes = \textbf{11}\end{tabular}                                                              \\ \hline
Lasso                 & .80(.01)                      & non-negative coefficients = \textbf{5}                                                                                                                                      \\ \bottomrule
\end{tabular}
\end{table}
To compare interpretability, we reported the complexity of each model averaged across 5 folds. For OA and BOA models, we reported the total number of patterns, average length of patterns and the total number of literals. OA models achieve the same performance as BOA models but with higher interpretability. This is due to a more flexible control over the size and shape of the pattern set compared to BOA models. RIPPER models are decision lists, having a different form than Or's of And's. 
We reported the total number of rules, average length of rules and total number of literals. For decision trees C4.5 and CART, we reported the depth of a tree, and the total number of nodes in a tree. For lasso, we reported the number of non-negative coefficients. 
Since baseline models have different logical forms than OA models, we compare one universal metric, the number of literals/nodes used in each model, marked in bold in Table~\ref{tab:sleep}. We find that OA models used substantially fewer literals than all other models while achieving a competitive accuracy to all models.
\setlength\tabcolsep{3pt}
\begin{table*}[t]
\centering
\caption{Accuracy comparison for OA models and baselines on UCI datasets.}
\label{uci}
\begin{tabular}{l|c|ccccccc|cc}
\toprule
\multicolumn{1}{l|}{\multirow{2}{*}{}} & \multirow{2}{*}{\begin{tabular}[c]{@{}c@{}}Data \\ Type\end{tabular}} & \multicolumn{7}{c|}{{\it Interpretable Models}}& \multicolumn{2}{l}{{\it Uninterpretable Models}} \\
\multicolumn{1}{l|}{}                  &                                                                       & OOA          & OOAx            & BOA          & Lasso    & C4.5     & CART     & RIPPER   & random forest           & SVM                     \\ \hline
blogger                                & \multirow{4}{*}{Categorical}                                          & .85(.11)     & {\bf .86(.10)} & .80(.04)     & .81(.08) & .77(.05) & .78(.07) & .76(.06) & .82(.07)                & .82(.10)                \\
votes                                  &                                                                       & {\bf .98(.02)}     & {\bf .98(.02)} & .95(.02)     & .96(.02) & .96(.02) & .96(.03) & .96(.01) & .95(.02)                & .97(.02)                \\
tic-tac-toe                            &                                                                       & {\bf 1(.00)} & {\bf 1(.00)}   & {\bf 1(.00)} & .71(.02) & .92(.03) & .93(.02) & .98(.01) & .99(.00)                & .99(.00)                \\
monks1                                 &                                                                       & {\bf 1(.00)} & {\bf 1(.00)}   & {\bf 1(.00)} & .76(.02) & .90(.06) & .88(.07) & .94(.12) & {\bf 1(.00)}            & {\bf 1(.00)}            \\ \hline
bupa                                   & \multirow{4}{*}{Numerical}                                            &   .65(.02)           & .65(.03)       & .66(.02)     & .68(.04) & .63(.04) & .68(.03) & .65(.05) & .70(.03)                & {\bf .73(.04)}          \\
transfusion                            &                                                                       &  .78(.02)            & {\bf .80(.01)} & .77(.01)     & .77(.02) & .76(.02) & .78(.02) & .78(.02) & .78(.02)                & .80(.02)                \\
banknote                               &                                                                       &  .98(.01)          & .97(.01)       & .96(.01)     & 1(.00)   & .90(.01) & .90(.02) & .91(.01) & .91(.01)                & {\bf 1(.00)}            \\
indian-diabetes                        &                                                                       &  .73(.02)        & {\bf .77(.03)} & .74(.02)     & .67(.01) & .66(.03) & .67(.01) & .67(.02) & .76(.02)                & .69(.01)                \\ \hline
heart                                  & mixed                                                                 &   .80(.04)       & .84(.05)       & .83(.07)     & .85(.04) & .76(.06) & .77(.06) & .78(.04) & .81(.06)                & {\bf .86(.06)}                            
\\
\bottomrule
\end{tabular}
\end{table*}

An example of an OA model is shown below.
\begin{algorithmic}
 \If { a patient satisfies (age $\geq 30$ {\em AND} patient checked snoring as a potential symptom in the questionnaire), \\ \quad {\em OR}  (age $\geq 30$ {\em AND} patient checked snoring as a reason for "why are you here" in the questionnaire),\\ \quad {\em OR}  (age $\geq 30$ {\em AND} has hypertension), \\ \quad {\em OR} (BMI $\geq 25$)}
 \State predict the patient has sleep apnea,
 \Else
 \State predict the patient does not have sleep apnea.
 \EndIf
\end{algorithmic}
The model lists four patterns to characterize patients that has sleep apnea. It is a sparse model with only a few attributes and a simple structure, and can potentially be used by people without a machine learning background.
\vspace{-4mm}
\subsection{Performance on UCI Datasets}
We applied OOA and OOAx to several UCI datasets and compared with 5 previously mentioned interpretable models and 2 black box models, random forest and SVM. In the experimental set up, we set a time limit for the MIP in the OOA framework to ensure that it returns a solution in a reasonable amount of time.
Table \ref{uci} displays the mean and standard deviation of out-of-sample accuracy across 5 folds. 

We observed that even with the severe restrictions, OA classifiers achieve very competitive performance. For the four categorical datasets in Table \ref{uci}, OA classifiers always do better than other models. Especially for tic-tac-toe and monks, where there are correct models that correctly classify all examples, OA models are able to discover the correct patterns and achieve 100\% accuracy. For numerical and mixed datasets, OA models' performance levels are on par with those of other methods, sometimes slightly dominated by uninterpretable machine learning models. 


We show an example of an OA classifier learned from  dataset ``votes'' using OOA framework. This data set includes votes for each of the U.S. House of Representatives Congressmen on 16 key votes on water project cost sharing, duty free exports, immigration, education spending, anti-satellite test ban and etc. The objective is to predict if the voter is democratic or republican. 
\begin{algorithmic}
 \If { a voter (votes for eduction spending {\em AND} for physician fee freeze {\em AND} against water project cost sharing), \\ 
 \quad {\em OR}  (votes for  export administration act of South Africa {\em AND} for physician fee freeze {\em AND} agains synfuels corporation cutback),\\ 
 \quad {\em OR}  (votes against  aid to Nicaraguan Contrast {\em AND} against  adoption of the budget resolution {\em AND} against handicapped infants and toddlers act {\em AND} against superfund right to sue), \\ 
 \quad {\em OR} (votes for  adoption of the budget resolution {\em AND} for  physician fee freeze {\em AND} agains synfuels corporation cutback),\\ 
  \quad {\em OR} (votes against  adoption of the budget resolution {\em AND} for  El Salvador aid {\em AND} for physician fee freeze),\\
  \quad {\em OR} (votes for aid to Nicaraguan Contras {\em AND} against adoption of the budget resolution {\em AND} against duty free exports {\em AND} against synfuels corporation cutback),}
 \State predict the voter is republican,
 \Else
 \State predict the voter is democratic.
 \EndIf
\end{algorithmic}

\vspace{-4mm}
\section{Conclusion}
OA models have a long history. They are particularly useful as either (i) interpretable screening mechanisms, where they reduce much of the data from consideration from a further round of modeling, and (ii) consideration sets from marketing, which are rules that humans create to reduce cognitive load in order to make a decision.

We presented two optimization-based frameworks for learning Or's of And's. 
The first framework, OOA, uses a MIP to directly form patterns from data. It can deal with both categorical and numerical data without pre-processing. The second framework OOAx reduces computation through pre-mining patterns. We provided bounds on the support of patterns that guarantee that the pattern space can be safely reduced. Both methods can produce high quality OA classifiers, as demonstrated through experiments. They achieve competitive performance compared to other classifiers, with a substantial gain in sparsity and interpretability. 

One of the main benefits not discussed extensively earlier is the benefit of customizability. Because we use MIP/ILP technology, constraints of almost any kind are very easy to include, and we do not need to derive a new algorithm; this benefit does not come with any other technology that we know of. Customizability is an important component of interpretability.


\newpage
\bibliography{mboa,rules} 
\bibliographystyle{aaai}
\newpage
\section*{SUPPLEMENTARY MATERIAL}
\begin{proof}(Of Theorem \ref{thm:neg_supp})
The objective function of the optimal solution $A^*$ is
\begin{align*}
&L(A^*) =  \frac{N^+ -\text{supp}^{S^+}(A^*) + \text{supp}^{S^-}(A^*) }{N} + \\
& C_1\#\text{literals}(A^*) +  C_2 \#\text{patterns}(A^*) \notag \\
\geq &\frac{N^+ -\text{supp}^{S^+}(A^*) + \text{supp}^{S^-}(A^*) }{N} + C_1 + C_2 \notag \\
\geq &\frac{\text{supp}^{S^-}(A^*) }{N} + C_1 + C_2.
\end{align*}
These inequalities become tight when $A^*$ was one pattern with one literal covers the whole positive class and some of the negative class.
Let $\emptyset$ denote an empty set where there are no patterns and all the data points are classified as negative, so the total number of errors are the number of positive data, denoted as $N^+$. The objective function given $\emptyset$ is
\begin{equation*}
L(\emptyset) = \frac{N^+}{N}.
\end{equation*}
Since $A^* \in \arg\min_{A} L(A)$, $L(A^*) \leq L(\emptyset)$, then
\begin{equation*}
\text{supp}^{S^-}(A) \leq N^+ -N\left(C_1+C_2 \right)
\end{equation*}
Since $I^{S^-}(A)  = \cup_{a \in A}I^{S^-}_a$, then $\text{supp}^{S^-}(z)\leq \text{supp}^{S^-}(A^*) $, thus
\begin{equation*}
\text{supp}^{S^-}(z) \leq N^+ -N\left(C_1+C_2 \right)
\end{equation*}
\end{proof}
\begin{proof}(Of Theorem \ref{thm:pos_supp})
The worst case when pattern $z$ is removed is when $z$ is an accurate rule with confidence equal to 100\%, i.e., all data points that satisfy pattern $z$ are positive; and the points covered by $z$ are not covered by any other pattern. Therefore once removing it, the number of errors increased by the positive support of z. On the other hand, removing $z$ benefits the regularization terms, by decreasing the sum of pattern lengths by at least 1, and the number of patterns by 1. Then the objective function given $A_{\backslash z}$ obeys
\begin{align*}
L(A_{\backslash z}) = & L(A) + \frac{\#\text{error}(A_{\backslash z}) - \#\text{error}(A)}{N} + \\
& C_1\left(\#\text{literals}(A_{\backslash z})-\#\text{literals}(A)\right) + \\
& C_2\left( \#\text{patterns}(A_{\backslash z})-\#\text{patterns}(A)\right)\\
\leq & L(A) +\frac{\text{supp}^{S^+}(z)}{N} - C_1 -C_2.
\end{align*}
In order to prove $L(A_{\backslash z}) \leq L(A)$, we need $$L(A) + \frac{\text{supp}^{S^+}(z)}{N} - C_1 -C_2 \leq L(A),$$ i.e., $$\text{supp}^{S^+}(z)\leq \left(C_1+C_2\right)N.$$
\end{proof}
\begin{proof}(Of Theorem \ref{thm:size})
Let $M^* = |A^*|$. $A^*$ contains $M^*$ patterns where each pattern has at least one literal. Therefore, the objective function given $A^*$ is lower bounded by
\begin{align*}
L(A^*) & \geq \frac{\#\text{error}(A^*)}{N} + C_1M^* + C_2 M^*  \\
& \geq M^* \left(C_1 +C_2\right) .
\end{align*}
Since $A^* \in \arg\min_{A} L(A)$, $L(A^*)\leq L(\emptyset)$. That is
\begin{equation*}
M^* \left(C_1 +C_2\right)\leq \frac{N^+}{N}.
\end{equation*}
Therefore
\begin{equation*}
M^* \leq \frac{N^+/N}{C_1+C_2}.
\end{equation*}
\end{proof}
\end{document}